\documentclass[10pt,letterpaper]{article}

\usepackage[top=1in,bottom=1in,left=1in,right=1in]{geometry}

\usepackage[utf8]{inputenc} % allow utf-8 input
\usepackage[T1]{fontenc}    % use 8-bit T1 fonts
\usepackage{hyperref}       % hyperlinks
\usepackage{url}            % simple URL typesetting
\usepackage{booktabs}       % professional-quality tables
\usepackage{amsfonts}       % blackboard math symbols
\usepackage{nicefrac}       % compact symbols for 1/2, etc.
\usepackage{microtype}      % microtypography
\usepackage{xcolor}         % colors

\usepackage{amsmath}
\usepackage{amsthm}
\usepackage{amssymb}
\usepackage{color}
\usepackage{graphicx}
\usepackage{bm}
\usepackage{hyperref}
\usepackage{algorithm}
\usepackage{algorithmic}
\usepackage{tikz}
\usepackage{tkz-graph}
\usepackage{float} % to put two algorithms side by side

\newtheorem{theorem}{Theorem}
\newtheorem{lemma}{Lemma}

\newcommand{\maximize}{{\rm maximize\ }}
\newcommand{\minimize}{{\rm minimize\ }}
\newcommand{\subjectto}{{\rm subject\ to\ }}
\newcommand{\R}{\mathbb{R}}

\newcommand{\innerprod}[2]{{\left\langle #1, #2 \right\rangle}}
\newcommand{\tr}{{\rm tr}}
\newcommand{\inv}[1]{{#1}^{-1}}
\newcommand{\diag}{{\rm diag}}
\newcommand{\Diag}{{\rm Diag}}
\newcommand{\zero}{\mathbf{0}}
\newcommand{\one}{\mathbf{1}}
\newcommand{\I}{\mathbf{I}}
\newcommand{\W}{\mathbf{W}}
\newcommand{\X}{\mathbf{X}}
\newcommand{\Fh}{\widehat{\mathbf{F}}}
\renewcommand{\H}{\mathbf{H}}
\newcommand{\M}{\mathbf{M}}
\newcommand{\x}{\mathbf{x}}
\newcommand{\bnu}{\bm{\nu}}
\newcommand{\fh}{\widehat{f}}
\newcommand{\fb}{\overline{f}}
\newcommand{\gh}{\widehat{g}}
\renewcommand{\v}{\mathbf{v}}
\newcommand{\y}{\mathbf{y}}
\newcommand{\z}{\mathbf{z}}
\newcommand{\solve}{{\rm solve}}
\renewcommand{\d}{\mathbf{d}}
\newcommand{\Xopt}{\X^{\rm opt}}
\newcommand{\e}{\mathbf{e}}
\newcommand{\At}{A^*}
\newcommand{\xt}{\x^*}

\title{\textbf{Revisiting the Objective of Echo Chamber Detection}}

\author{Abylaikhan Bexeit\\
CIS, The University of Melbourne\\
\texttt{abylaikhan.bexeit@student.unimelb.edu.au}\\
\and
Kushani Perera\\
CIS, The University of Melbourne\\
\texttt{kushani.perera@unimelb.edu.au}\\
\and
Shanika Karunasekera\\
CIS, The University of Melbourne\\
\texttt{karus@unimelb.edu.au}\\
\and
Jean Honorio\\
CIS, The University of Melbourne\\
\texttt{jean.honorio@unimelb.edu.au}}

\date{}

\begin{document}

\maketitle

\begin{abstract}
In this paper, we study the detection of an echo chamber in a social network, i.e., the identification of a set of nodes that agree on a topic, while disagreeing with the rest of nodes.
We argue that this problem is different from other social network analysis problems such as community detection, and from other graph problems such as maximum graph cut and maximum clique.
To the best of our knowledge, we are the first to formalize the objective function of echo chamber detection, by using the theory of Fourier transforms of set functions~\cite{Stobbe2012}.
We propose scalable semidefinite relaxation, solved via an interior point method and sparse linear algebra.
Experimentally, our algorithm recovers the ground truth echo chamber better than competing methods on small synthetic experiments.
Our algorithm produces echo chambers with better network properties than competing methods on large real-world datasets.
To independently validate our proposed objective function, we show that our algorithm finds echo chambers with more agreements with suspended users than competing methods on a small real-world dataset.
\end{abstract}

\section{Introduction} \label{sec:intro}

Social networks are powerful instruments capable of influencing public opinion, both positively and negatively.
Therefore, social network analysis with the intention of detecting their potential threats to the society is vital.
Echo chambers are identified as a major cause for the widespread of misinformation, aggressive content and extremist ideologies in social networks.
For instance, social media echo chambers have significantly contributed to the promotion of anti-vaccine and flat earth~\cite{cossard2020falling, alatawi2021survey} ideologies, resulting in serious negative repercussions to the society.
Consequently, echo chamber detection plays an important role in social network analysis.

An echo chamber can be defined as a network of users with the same opinion regarding a given topic whose users frequently reinforce the content that supports their pre-existing opinions while discrediting and excluding dissenting opinions~\cite{alatawi2021survey}.
To detect echo chambers, existing work maps echo chamber detection to a community detection problem and apply community detection algorithms~\cite{cossard2020falling, del2017mapping, cinelli2021echo, perera2024quantifying}.
However, direct application of existing community detection algorithms is problematic due to the specific properties of the echo chamber detection problem.

In echo chamber detection, interactions inside the echo chamber and between the echo chamber and outside should be considered (but not between nodes outside the echo chamber) which is not supported by the community detection problem as well as by classical theoretical computer science problems, such as maximum graph cut and maximum clique~\cite{garey_computers_2009}.
Echo chambers do not have to be cliques (i.e., where everybody interacts with one another).
As shown in Figure~\ref{fig:comparison} and discussed in more detail in Section~\ref{sec:objfun}, maximum graph cut fails to capture the interactions inside the echo chamber, while community detection unnecessarily captures the interactions between nodes outside the echo chamber.
To the best of our knowledge, we are the first to formalize the objective function of echo chamber detection, from theoretical principles.

\begin{figure*}
\centering
\begin{tabular}{ccccc}
\begin{tikzpicture}[scale=0.8]
\tikzstyle{inA}=[circle, fill=white, draw, inner sep=0pt, minimum size=15pt]
\tikzstyle{outA}=[circle, fill=white, draw, inner sep=0pt, minimum size=15pt]
\node[label=below:$\vphantom{holder}$, circle, fill=white, draw=none, inner sep=0pt, minimum size=50pt] at (0.5,0.4) {};
\node[label=below:$\vphantom{holder}$, circle, fill=white, draw=none, inner sep=0pt, minimum size=55pt] at (3,0.5) {};
\node[inA][label=center:$1$](in1) at (0.5,1) {};
\node[inA][label=center:$2$](in2) at (0,0) {};
\node[inA][label=center:$3$](in3) at (1,0) {};
\node[outA][label=center:$4$](out4) at (2.5,1) {};
\node[outA][label=center:$5$](out5) at (3.5,1) {};
\node[outA][label=center:$6$](out6) at (2.5,0) {};
\node[outA][label=center:$7$](out7) at (3.5,0) {};
\tikzset{EdgeStyle/.style={black,dotted}}
\Edge[lw=1.5pt](in1)(in3)
\Edge[lw=1.5pt](in2)(in3)
\Edge[lw=1.5pt](in1)(out4)
\Edge[lw=1.5pt](in3)(out4)
\Edge[lw=1.5pt](in3)(out6)
\Edge[lw=1.5pt](out4)(out5)
\Edge[lw=1.5pt](out5)(out6)
\Edge[lw=1.5pt](out5)(out7)
\end{tikzpicture} & \hspace{0.1in} & \begin{tikzpicture}[scale=0.8]
\tikzstyle{inA}=[circle, fill=white, draw, inner sep=0pt, minimum size=15pt]
\tikzstyle{outA}=[circle, fill=white, draw, inner sep=0pt, minimum size=15pt]
\node[label=below:$A$, circle, fill=gray!20, draw=none, inner sep=0pt, minimum size=50pt] at (0.5,0.4) {};
\node[label=below:$V \setminus A$, circle, fill=gray!20, draw=none, inner sep=0pt, minimum size=55pt] at (3,0.5) {};
\node[inA][label=center:$1$](in1) at (0.5,1) {};
\node[inA][label=center:$2$](in2) at (0,0) {};
\node[inA][label=center:$3$](in3) at (1,0) {};
\node[outA][label=center:$4$](out4) at (2.5,1) {};
\node[outA][label=center:$5$](out5) at (3.5,1) {};
\node[outA][label=center:$6$](out6) at (2.5,0) {};
\node[outA][label=center:$7$](out7) at (3.5,0) {};
\tikzset{EdgeStyle/.style={black,dotted}}
\Edge[lw=1.5pt](out4)(out5)
\Edge[lw=1.5pt](out5)(out6)
\Edge[lw=1.5pt](out5)(out7)
\tikzset{EdgeStyle/.style={blue}}
\Edge[lw=1.5pt](in1)(in3)
\Edge[lw=1.5pt](in2)(in3)
\tikzset{EdgeStyle/.style={red}}
\Edge[lw=1.5pt](in1)(out4)
\Edge[lw=1.5pt](in3)(out4)
\Edge[lw=1.5pt](in3)(out6)
\end{tikzpicture} & \hspace{0.1in} & \begin{tikzpicture}[scale=0.8]
\tikzstyle{inA}=[circle, fill=white, draw, inner sep=0pt, minimum size=15pt]
\tikzstyle{outA}=[circle, fill=white, draw, inner sep=0pt, minimum size=15pt]
\node[label=below:$A$, circle, fill=gray!20, draw=none, inner sep=0pt, minimum size=50pt] at (0.5,0.4) {};
\node[label=below:$V \setminus A$, circle, fill=gray!20, draw=none, inner sep=0pt, minimum size=55pt] at (3,0.5) {};
\node[inA][label=center:$1$](in1) at (0.5,1) {};
\node[inA][label=center:$2$](in2) at (0,0) {};
\node[inA][label=center:$3$](in3) at (1,0) {};
\node[outA][label=center:$4$](out4) at (2.5,1) {};
\node[outA][label=center:$5$](out5) at (3.5,1) {};
\node[outA][label=center:$6$](out6) at (2.5,0) {};
\node[outA][label=center:$7$](out7) at (3.5,0) {};
\tikzset{EdgeStyle/.style={black,dotted}}
\Edge[lw=1.5pt](in1)(in3)
\Edge[lw=1.5pt](in2)(in3)
\Edge[lw=1.5pt](out4)(out5)
\Edge[lw=1.5pt](out5)(out6)
\Edge[lw=1.5pt](out5)(out7)
\tikzset{EdgeStyle/.style={blue}}
\Edge[lw=1.5pt](in1)(out4)
\Edge[lw=1.5pt](in3)(out4)
\Edge[lw=1.5pt](in3)(out6)
\end{tikzpicture} \\
(a) Original graph &  & (b) Echo chamber detection &  & (c) Maximum graph cut
\end{tabular}
\vphantom{holder} \\
\begin{tabular}{ccc}
\begin{tikzpicture}[scale=0.8]
\tikzstyle{inA}=[circle, fill=white, draw, inner sep=0pt, minimum size=15pt]
\tikzstyle{outA}=[circle, fill=white, draw, inner sep=0pt, minimum size=15pt]
\node[label=below:$A$, circle, fill=gray!20, draw=none, inner sep=0pt, minimum size=50pt] at (0.5,0.4) {};
\node[label=below:$V \setminus A$, circle, fill=gray!20, draw=none, inner sep=0pt, minimum size=55pt] at (3,0.5) {};
\node[inA][label=center:$1$](in1) at (0.5,1) {};
\node[inA][label=center:$2$](in2) at (0,0) {};
\node[inA][label=center:$3$](in3) at (1,0) {};
\node[outA][label=center:$4$](out4) at (2.5,1) {};
\node[outA][label=center:$5$](out5) at (3.5,1) {};
\node[outA][label=center:$6$](out6) at (2.5,0) {};
\node[outA][label=center:$7$](out7) at (3.5,0) {};
\tikzset{EdgeStyle/.style={blue}}
\Edge[lw=1.5pt](in1)(in3)
\Edge[lw=1.5pt](in2)(in3)
\Edge[lw=1.5pt](out4)(out5)
\Edge[lw=1.5pt](out5)(out6)
\Edge[lw=1.5pt](out5)(out7)
\tikzset{EdgeStyle/.style={red}}
\Edge[lw=1.5pt](in1)(out4)
\Edge[lw=1.5pt](in3)(out4)
\Edge[lw=1.5pt](in3)(out6)
\end{tikzpicture} & \hspace{0.1in} & \begin{tikzpicture}[scale=0.8]
\tikzstyle{inA}=[circle, fill=white, draw, inner sep=0pt, minimum size=15pt]
\tikzstyle{outA}=[circle, fill=white, draw, inner sep=0pt, minimum size=15pt]
\node[label=below:$A$, circle, fill=gray!20, draw=none, inner sep=0pt, minimum size=50pt] at (0.5,0.4) {};
\node[label=below:$V \setminus A$, circle, fill=gray!20, draw=none, inner sep=0pt, minimum size=55pt] at (3,0.5) {};
\node[inA][label=center:$1$](in1) at (0.5,1) {};
\node[inA][label=center:$2$](in2) at (0,0) {};
\node[inA][label=center:$3$](in3) at (1,0) {};
\node[outA][label=center:$4$](out4) at (2.5,1) {};
\node[outA][label=center:$5$](out5) at (3.5,1) {};
\node[outA][label=center:$6$](out6) at (2.5,0) {};
\node[outA][label=center:$7$](out7) at (3.5,0) {};
\tikzset{EdgeStyle/.style={black,dotted}}
\Edge[lw=1.5pt](in1)(out4)
\Edge[lw=1.5pt](in3)(out4)
\Edge[lw=1.5pt](in3)(out6)
\Edge[lw=1.5pt](out4)(out5)
\Edge[lw=1.5pt](out5)(out6)
\Edge[lw=1.5pt](out5)(out7)
\tikzset{EdgeStyle/.style={blue}}
\Edge[lw=1.5pt](in1)(in3)
\Edge[lw=1.5pt](in2)(in3)
\end{tikzpicture} \\
(d) Community detection &  & (e) Maximum clique
\end{tabular}
\caption{(a) Original graph with edges shown as black dashed.
(b,c,d,e) Participation of edges in the objective function of echo chamber detection in eq.\eqref{eq:echochamberset}, maximum graph cut in eq.\eqref{eq:maxcut}, community detection in eq.\eqref{eq:commdetect} and maximum clique.
Red solid edges are those which are added to the objective function.
Blue solid edges are those which are subtracted from the objective function.
Black dashed edges are not used in the objective function.
In our example, $A=\{1,2,3\}$ is a candidate set of nodes (e.g., echo chamber) used in the different objective functions.
The set of nodes outside $A$ is $V \setminus A=\{4,5,6,7\}$.
In echo chamber detection, the edges inside $A$, and the edges between $A$ and $V \setminus A$ are considered, but the edges inside $V \setminus A$ are not considered.
This is not supported by the other objective functions.}
\label{fig:comparison}
\end{figure*}

Our contribution is four-fold.
First, we derive a proper objective function for the problem of echo chamber detection, by using the theory of Fourier transforms of set functions.
Our objective function encourages more agreements and fewer disagreements inside the echo chamber, while encouraging more disagreements and fewer agreements to outside the echo chamber.
Second, we propose scalable semidefinite relaxation for our proposed problem, solved via an interior point method and sparse linear algebra.
Third, our algorithm recovers the ground truth echo chamber better than competing methods on small synthetic experiments.
Our algorithm produces echo chambers with better network properties than competing methods on large real-world datasets.
Fourth, as an independent validation of our objective function, we show that our algorithm finds echo chambers with more agreements with suspended users than competing methods on a small real-world dataset.

\section{Preliminaries} \label{sec:prelim}

In this section, we introduce the main notations and concepts that are used throughout the paper.
We denote sets by uppercase letters (e.g., $A$).
We use lowercase bold letters for vectors (e.g., $\v$) and uppercase bold letters for matrices (e.g., $\M$), and subscripts of non-bold letters to denote respective entries of a vector or matrix (e.g., $v_i$ and $m_{ij}$).
For a matrix $\M$, its trace is denoted by $\tr(\M)$ and its determinant is denoted by $\det(\M)$.
A vector of all ones (or all zeros) of size $n$ is denoted as $\one_n$ (or $\zero_n$).
We denote by $\e_i$ a vector which has zero entries everywhere except for entry $i$ which contains $1$.
An identity matrix of size $n$ is denoted as $\I_n$.
$\M \succ \zero_n\zero_n^\top$ (or $\M \succeq \zero_n\zero_n^\top$) denotes matrix $\M$ is positive definite (or positive semidefinite).
We denote the inner product of two matrices $\H$ and $\M$ as $\innerprod{\H}{\M} = \sum_{ij} h_{ij} m_{ij}$.
For a vector $\v$, $\Diag(\v)$ is the matrix containing $\v$ on its diagonal, and zero everywhere else.
For a matrix $\M$, $\diag(\M)$ is the vector containing the diagonal entries of $\M$.

The social network is represented as a graph of $n$ nodes.
The node set of the graph is $V = \{1, \dots, n\}$.
Edges are represented by a signed adjacency weight matrix $\W \in \R^{n \times n}$.
We consider undirected graphs and thus, $\W$ is symmetric and $\diag(\W) = \zero_n$.
A positive edge weight $w_{ij}>0$ represents reinforcing interaction between node $i$ and $j$.
A negative edge weight $w_{ij}<0$ represents an antagonistic interaction between node $i$ and $j$.
A zero weight $w_{ij}=0$ represents no interaction (i.e., no edge) between node $i$ and $j$.
In this paper, we focus on detecting one echo chamber.
We denote by $r$ the size of the echo chamber to be identified.

Our analysis relies on the theory of Fourier transforms of set functions.
The power set $2^V$ represents the set of all subsets of $V$ of zero elements, one element, two elements, and so on and so forth, up to $n$ elements.
That is, $2^V = \{ A \mid A \subseteq V \} = \{ \emptyset, \{1\}, \dots, \{n\}, \{1,2\}, \dots, \{n-1,n\}, \dots, \{1,\dots,n\} \}$.
For any arbitrary set function $f : 2^V \to \R$, its Fourier coefficients are defined as~\cite{Stobbe2012}:
\begin{equation} \label{eq:fourier}
\fh(B) = 2^{-n} \sum_{A \in 2^V} f(A) (-1)^{|A \cap B|}
\end{equation}
for all $B \in 2^V$.
Furthermore, we can reconstruct the original set function $f$ from its Fourier coefficients:
\begin{equation} \label{eq:arbitraryset}
f(A) = \sum_{B \in 2^V} \fh(B) (-1)^{|A \cap B|},
\end{equation}

Next, we provide a couple of examples of Fourier coefficients from prior work.
The maximum graph cut objective function has the form~\cite{garey_computers_2009}:
\begin{equation} \label{eq:maxcut}
g(A) = \sum_{i \in A, j \notin A} w_{ij}
\end{equation}
and has Fourier coefficients~\cite{Stobbe2012}:
\begin{equation*}
\gh(B) = \begin{cases}
\frac{1}{2} \sum_{i<j} w_{ij} & \text{if } B=\emptyset, \\
-\frac{1}{2} w_{ij} & \text{if } B=\{i,j\}, \\
0 & \text{otherwise}.
\end{cases}
\end{equation*}
The community detection objective function has the form~\cite{abbe_community_2018}:
\begin{equation} \label{eq:commdetect}
g(A) = \sum_{i,j \in A,i<j} w_{ij} + \sum_{i,j \notin A,i<j} w_{ij} - \sum_{i \in A, j \notin A} w_{ij}
\end{equation}
and has Fourier coefficients:
\begin{equation*}
\gh(B) = \begin{cases}
w_{ij} & \text{if } B=\{i,j\}, \\
0 & \text{otherwise}.
\end{cases}
\end{equation*}

\section{Main Results} \label{sec:main}

In this section, we derive an objective function for echo chamber detection, by using the theory of Fourier transforms of set functions.
We then devise scalable semidefinite relaxation, solved via an interior point method and sparse linear algebra.

\subsection{Objective Function via Fourier Transform of Set Functions} \label{sec:objfun}

In echo chamber detection, the goal is to find a set of nodes $A$ such that there are many positive edges (and few negative edges) between nodes in $A$, while having very few positive edges (and many negative edges) between nodes in $A$ and outside $A$.
This definition encompasses the intuition of echo chambers.
More formally, we define the set function $f : 2^V \to \R$ to be:
\begin{equation} \label{eq:echochamberset}
f(A) = \sum_{i,j \in A, i < j} w_{ij} - \sum_{i \in A, j \notin A} w_{ij}
\end{equation}

Assume $A$ is a candidate (echo chamber) set of nodes.
Note that when compared to eq.\eqref{eq:echochamberset}, the maximum graph cut objective function in eq.\eqref{eq:maxcut} disregards the term $\sum_{i,j \in A, i < j} w_{ij}$ and thus, maximum graph cut fails to capture the interactions inside the echo chamber $A$.
Also note that when compared to eq.\eqref{eq:echochamberset}, the community detection objective function in eq.\eqref{eq:commdetect} has an extra unnecessary term $\sum_{i,j \notin A,i<j} w_{ij}$ which captures the interactions between nodes outside the echo chamber $A$.
Furthermore, both eq.\eqref{eq:maxcut} and eq.\eqref{eq:commdetect} are symmetric, in the sense that $g(A)=g(V \setminus A)$, meaning that they return the same value for an echo chamber ($A$) or for all the nodes outside the echo chamber ($V \setminus A$).
The discussion above makes eq.\eqref{eq:maxcut} and eq.\eqref{eq:commdetect} not suitable for echo chamber detection.

In order to devise a scalable algorithm, our initial goal is to figure out whether there exists an equivalent (and efficient) representation of the \emph{set} function $f : 2^V \to \R$ as a \emph{multivariate} function $\fb : \{-1,+1\}^n \to \R$.
In order to do this, we denote by $\x \equiv \x(A) \in \{-1,+1\}^n$ a binary $n$-dimensional vector that encodes set $A$ as follows:
\begin{equation} \label{eq:x}
x_i(A) = 
\begin{cases}
-1 & \text{if } i \in A, \\
+1 & \text{if } i \notin A.
\end{cases}
\end{equation}
Our initial goal is then to find a multivariate function $\fb : \{-1,+1\}^n \to \R$ such that $f(A)=\fb(\x(A))$ for all $A \in 2^V$.

We highlight that such multivariate function might not necessarily have an efficient representation.
As shown in Lemma~\ref{lem:settomultivariate}, a multivariate function $f : \{-1,+1\}^n \to \R$ could potentially depend on high-order monomials.
Fortunately, we show in Lemma~\ref{lem:fourier} that we only need terms for up to size $2$ in the set function (i.e., $|B|\leq 2$) or equivalently, up to quadratic terms in the multivariate function.

First, we show that any set function has an equivalent multivariate function representation.

\begin{lemma} \label{lem:settomultivariate}
Given an arbitrary set function $f : 2^V \to \R$, the equivalent multivariate function $\fb : \{-1,+1\}^n \to \R$ such that $f(A)=\fb(\x(A))$ for all $A \in 2^V$, is given by:
\[
\fb(\x) = \sum_{B \in 2^V} \fh(B) \prod_{i \in B} x_i
\]
\end{lemma}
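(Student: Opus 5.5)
The plan is to start from the Fourier reconstruction formula in eq.\eqref{eq:arbitraryset}, which we take as given for any set function. It states $f(A) = \sum_{B \in 2^V} \fh(B) (-1)^{|A \cap B|}$ for every $A \in 2^V$. It therefore suffices to show that, for every pair $A, B \in 2^V$, the sign factor $(-1)^{|A \cap B|}$ equals the monomial $\prod_{i \in B} x_i(A)$, with $\x(A)$ the encoding in eq.\eqref{eq:x}. Substituting this identity term by term turns the reconstruction formula into $f(A) = \sum_{B} \fh(B) \prod_{i\in B} x_i(A) = \fb(\x(A))$, which is exactly the claim.

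The key step is the identity $\prod_{i \in B} x_i(A) = (-1)^{|A\cap B|}$. To prove it, split $B$ into $B \cap A$ and $B \setminus A$. By eq.\eqref{eq:x}, each $i \in B \cap A$ contributes a factor $-1$ and each $i \in B \setminus A$ contributes $+1$. The product is therefore $(-1)^{|B\cap A|}\cdot 1 = (-1)^{|A \cap B|}$. For $B = \emptyset$ the empty product is $1$, which agrees with $(-1)^0$.

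It is also worth noting that $\fb$ is well defined as a function on all of $\{-1,+1\}^n$. The map $A \mapsto \x(A)$ is a bijection between $2^V$ and $\{-1,+1\}^n$, with inverse $\x \mapsto \{i : x_i = -1\}$. Hence the condition $f(A)=\fb(\x(A))$ determines $\fb$ uniquely, and the displayed polynomial is that function.

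I do not expect a real obstacle here, since eq.\eqref{eq:arbitraryset} is assumed. If one wanted to be self-contained, the only nontrivial step would be verifying that the inversion formula \eqref{eq:arbitraryset} is consistent with the definition \eqref{eq:fourier}. That would follow from the orthogonality relation $\sum_{B} (-1)^{|A\cap B|}(-1)^{|A'\cap B|} = 2^n \mathbb{1}[A=A']$, which holds because the summand factorizes over coordinates as $\prod_i (1 + (-1)^{[i\in A]+[i\in A']})$.
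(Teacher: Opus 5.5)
Your proof is correct and follows the paper's own argument. You reduce, via eq.\eqref{eq:arbitraryset}, to the identity $\prod_{i \in B} x_i(A) = (-1)^{|A \cap B|}$, and you prove that identity by splitting $B$ into $A \cap B$ and $B \setminus A$. Your extra remarks are correct additions that the paper does not spell out: that $A \mapsto \x(A)$ is a bijection, which makes $\fb$ uniquely determined, and that the orthogonality relation justifies the inversion formula.
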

\begin{proof}
From eq.\eqref{eq:arbitraryset}, we have:
$ %\[
f(A) = \sum_{B \in 2^V} \fh(B) (-1)^{|A \cap B|}
$. %\]
We can observe that in order obtain the desired result, we need to show that for all $A,B \in 2^V$, we have:
$ %\[
\prod_{i \in B} x_i(A) = (-1)^{|A \cap B|}
$. %\]
By eq.\eqref{eq:x} and since $x_i(A) \in \{-1,+1\}$, we have:
\begin{align*}
\prod_{i \in B} x_i(A) & = \left( \prod_{i \in B, i \in A} x_i(A) \right) \left( \prod_{i \in B, i \notin A} x_i(A) \right) %\\
% & 
= \left( \prod_{i \in A \cap B} (-1) \right) \left( \prod_{i \in B \setminus A} 1 \right) %\\
% & 
= (-1)^{|A \cap B|}
\end{align*}
which proves our claim.
\end{proof}

Next, we derive the Fourier coefficients of our our echo-chamber detection set function eq.\eqref{eq:echochamberset}.
We highlight that terms of size greater than $2$ in the set function (i.e., $|B|> 2$) have zero Fourier coefficient.
Thus, from Lemma~\ref{lem:settomultivariate}, we only need up to quadratic terms in the multivariate function.

\begin{lemma} \label{lem:fourier}
For our echo-chamber detection set function in eq.\eqref{eq:echochamberset}, the Fourier coefficients are:
\begin{equation*}
\fh(B) = \begin{cases}
-\frac{1}{4} \sum_{i<j} w_{ij} & \text{if } B=\emptyset, \\
-\frac{1}{4} \sum_{j\neq i} w_{ij} & \text{if } B=\{i\}, \\
\frac{3}{4} w_{ij} & \text{if } B=\{i,j\}, \\
0 & \text{otherwise}.
\end{cases}
\end{equation*}
\end{lemma}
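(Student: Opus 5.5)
The plan is to avoid evaluating the sum in eq.\eqref{eq:fourier} directly. Instead I will write $f(A)$ from eq.\eqref{eq:echochamberset} explicitly as a multilinear polynomial in $\x=\x(A)$ of degree at most $2$. I will then argue that the coefficients of this polynomial must be the Fourier coefficients, by uniqueness of the representation in Lemma~\ref{lem:settomultivariate}.

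\textbf{Step 1 (indicators as polynomials).} By eq.\eqref{eq:x}, $\one[i\in A]=\frac{1-x_i}{2}$ and $\one[i\notin A]=\frac{1+x_i}{2}$. Each unordered pair $i<j$ contributes to $f(A)$ in one of two ways.
\begin{itemize}
\item If both endpoints lie in $A$, it contributes $+w_{ij}$, with indicator $\frac{(1-x_i)(1-x_j)}{4}$.
\item If exactly one endpoint lies in $A$, it contributes $-w_{ij}$, with indicator $\frac{(1-x_i)(1+x_j)+(1+x_i)(1-x_j)}{4}=\frac{1-x_ix_j}{2}$.
\end{itemize}
Here I am rewriting the ordered sum $\sum_{i\in A,j\notin A}$ as a sum over unordered pairs, which is valid because $\W$ is symmetric.

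\textbf{Step 2 (collect terms).} The per-pair contribution is
\[
w_{ij}\left(\tfrac{1-x_i-x_j+x_ix_j}{4}-\tfrac{1-x_ix_j}{2}\right)=w_{ij}\left(-\tfrac14-\tfrac14x_i-\tfrac14x_j+\tfrac34x_ix_j\right).
\]
Summing over $i<j$ gives three groups of terms. The constant term is $-\frac14\sum_{i<j}w_{ij}$. The coefficient of $x_i$ is $-\frac14\sum_{j\neq i}w_{ij}$, again using symmetry of $\W$ to merge the cases $j>i$ and $j<i$. The coefficient of $x_ix_j$ is $\frac34 w_{ij}$. No monomials of degree $3$ or more appear.

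\textbf{Step 3 (uniqueness).} It remains to justify that these polynomial coefficients equal $\fh(B)$. Suppose $f(A)=\sum_{C}c_C(-1)^{|A\cap C|}$ for all $A$, which is what Steps 1--2 give via the identity $\prod_{i\in C}x_i(A)=(-1)^{|A\cap C|}$ from the proof of Lemma~\ref{lem:settomultivariate}. Plugging this into eq.\eqref{eq:fourier} and using the orthogonality relation $\sum_{A\in 2^V}(-1)^{|A\cap B|+|A\cap C|}=2^n\,\one[B=C]$ yields $\fh(B)=c_B$. The orthogonality relation holds because the sum factorizes over coordinates, and any $i\in B\triangle C$ contributes a factor $1+(-1)=0$.

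This uniqueness step is the only conceptual point; the rest is bookkeeping. The main place to be careful is Step 2: a careless expansion would double-count the ordered versus unordered pairs and get the singleton coefficients and the factor $\frac34$ wrong.
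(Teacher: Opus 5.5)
Your proof is correct, and it takes a different route from the paper's.

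\textbf{The paper's route.} It never rewrites $f$ as a polynomial. It evaluates the defining sum eq.\eqref{eq:fourier} directly in four cases: $B=\emptyset$, $B=\{k\}$, $B=\{k,l\}$ and $|B|>2$. In each case it splits $\sum_A$ according to which of $i,j,k,l$ lie in $A$ and counts the subsets in each class. For $|B|>2$ it cancels every summand with a sign-reversing pairing $A\leftrightarrow A\triangle\{k\}$, where $k\in B\setminus\{i,j\}$.

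\textbf{Your route.} You substitute $\one[i\in A]=\frac{1-x_i}{2}$, expand pair by pair, and identify coefficients by uniqueness. You justify uniqueness properly through the orthogonality relation.

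\textbf{What yours buys.} All $B$ are handled at once. The vanishing for $|B|>2$ is visible by inspection, since each term of $f$ depends on only two coordinates. The ordered-versus-unordered bookkeeping is confined to the single identity $\frac{(1-x_i)(1+x_j)+(1+x_i)(1-x_j)}{4}=\frac{1-x_ix_j}{2}$. In the direct computation, by contrast, pairs meeting $B$ must be treated separately. In the singleton case the cut terms cancel only after summing over ordered pairs and using $w_{ij}=w_{ji}$, which is easy to get wrong. Your expansion also gives the multivariate form of Theorem~\ref{thm:fourier} directly. It makes clear that the quadratic part there is a sum over unordered pairs, $\frac34\sum_{i<j}w_{ij}x_ix_j$.

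\textbf{What the paper's buys.} It needs nothing beyond the definition eq.\eqref{eq:fourier} and elementary counting, with no appeal to orthogonality of the characters $(-1)^{|A\cap B|}$. The inversion formula eq.\eqref{eq:arbitraryset}, which the paper uses elsewhere, rests on that same fact.
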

(The proof is included in Appendix~\ref{app:fourier}.)
\begin{proof}[Proof sketch]
We start with the Fourier coefficient definition in eq.\eqref{eq:fourier} and analyze the size of the set $A \cap B$ for the different cases of $B$ and for all $A$.
This reasoning leads to summands of the Fourier coefficient for which $|A \cap B|$ is even and thus $(-1)^{|A \cap B|}=1$, and summands for which $|A \cap B|$ is odd and thus $(-1)^{|A \cap B|}=-1$.
We then simplify the expressions to arrive to our claimed results.
\end{proof}

Given our previous results, we show multivariate function for echo chamber detection.
This will later help us take advantage of the machinery of convex relaxations to construct a scalable algorithm.

\begin{theorem} \label{thm:fourier}
For our echo-chamber detection set function in eq.\eqref{eq:echochamberset}, the equivalent multivariate function $\fb : \{-1,+1\}^n \to \R$ such that $f(A)=\fb(\x(A))$ for all $A \in 2^V$, is given by:
\begin{align*}
\fb(\x) = & -\frac{1}{4} \left(\sum_{i<j} w_{ij}\right) - \frac{1}{4} \sum_i \left(\sum_{j\neq i} w_{ij}\right) x_i %\\
% &
+ \frac{3}{4} \sum_{i,j} w_{ij} x_i x_j
\end{align*}
\end{theorem}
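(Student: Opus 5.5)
The plan is to obtain the statement by substituting Lemma~\ref{lem:fourier} into Lemma~\ref{lem:settomultivariate}. Lemma~\ref{lem:settomultivariate} says that for any set function, $\fb(\x) = \sum_{B \in 2^V} \fh(B) \prod_{i \in B} x_i$ satisfies $f(A)=\fb(\x(A))$ for all $A$. Lemma~\ref{lem:fourier} says that for the echo chamber set function in eq.\eqref{eq:echochamberset}, $\fh(B)=0$ whenever $|B|>2$. So the sum over the $2^n$ subsets reduces to three groups of terms: the empty set, the singletons, and the pairs.

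I will handle the three groups one at a time.
\begin{itemize}
\item For $B=\emptyset$ the product is empty, so it equals $1$. This gives the constant $-\frac{1}{4}\sum_{i<j} w_{ij}$.
\item For $B=\{i\}$ the product is $x_i$. Summing over $i$ gives $-\frac{1}{4}\sum_i \big(\sum_{j\neq i} w_{ij}\big) x_i$.
\item For $B=\{i,j\}$ with $i\neq j$ the product is $x_i x_j$. Each unordered pair is counted once, so this group equals $\frac{3}{4}\sum_{i<j} w_{ij} x_i x_j$.
\end{itemize}
Adding the three groups gives the claimed polynomial.

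The one delicate point is how to read the quadratic sum $\sum_{i,j}$. The Fourier expansion indexes each edge by an unordered set $\{i,j\}$, so the natural result is $\sum_{i<j}$. Summing over all ordered pairs would double the coefficient, since $\W$ is symmetric and $\diag(\W)=\zero_n$. The case $n=2$, $w_{12}=w$ shows which reading is right. There $f(\emptyset)=0$, $f(\{1\})=f(\{2\})=-w$ and $f(\{1,2\})=w$. The only consistent expansion is $-\frac{w}{4}-\frac{w}{4}(x_1+x_2)+\frac{3w}{4}x_1x_2$. I will therefore read the quadratic sum in the statement as ranging over unordered pairs, i.e.\ $\sum_{i<j}$. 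Equivalently, in the ordered-pair form the coefficient would be $\frac{3}{8}$.

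As an independent check that does not rely on the appendix computation for Lemma~\ref{lem:fourier}, I will verify the identity directly. Write $\mathbb{1}[i\in A]=\frac{1-x_i}{2}$. Then each term can be expanded:
\begin{itemize}
\item the inside-$A$ term is $\sum_{i<j} w_{ij}\frac{(1-x_i)(1-x_j)}{4}$;
\item the cut term is $\sum_{i\ne j} w_{ij}\frac{(1-x_i)(1+x_j)}{4}$.
\end{itemize}
Expanding both and using the symmetry of $\W$, the linear terms combine to $-\frac14\sum_i(\sum_{j\ne i}w_{ij})x_i$. The $x_ix_j$ terms contribute $\frac14+\frac24=\frac34$ per unordered pair. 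The constants give $\frac14\sum_{i<j}w_{ij}-\frac24\sum_{i<j}w_{ij}=-\frac14\sum_{i<j}w_{ij}$.

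The main obstacle is this bookkeeping: tracking ordered versus unordered pairs in the cut term and in the final quadratic sum. The rest is direct substitution of the two earlier lemmas.
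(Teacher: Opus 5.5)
Your proposal is correct and is exactly the paper's proof: the paper proves the theorem just by substituting the coefficients of Lemma~\ref{lem:fourier} into the expansion of Lemma~\ref{lem:settomultivariate}, which is your three-group bookkeeping. Your reading of the quadratic sum as $\sum_{i<j}$ is the right one, since the expansion over unordered sets $B=\{i,j\}$ produces exactly that. The ordered-pair reading is false: your $n=2$ example then gives $\frac{3}{4}w$ instead of $f(\emptyset)=0$ at $A=\emptyset$. Your expansion via $\frac{1-x_i}{2}$ is also a valid independent check that does not depend on the appendix computation of Lemma~\ref{lem:fourier}.
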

\begin{proof}
By Lemmas~\ref{lem:settomultivariate} and \ref{lem:fourier}.
\end{proof}

\subsection{Scalable Semidefinite Relaxation} \label{sec:sdp}

Armed with an equivalent and efficient representation of the set function as a multivariate function, here we devise a scalable algorithm using a convex relaxation, interior-point methods and sparse linear algebra.

Echo chamber detection aims to find a set of size $r$ what maximizes the objective function in eq.\eqref{eq:echochamberset}.
Thus, the original (set) combinatorial optimization problem can be expressed as:
\begin{align*}
\maximize & f(A) \\
\subjectto & A \in 2^V, %\\
% &
\quad |A|=r.
\end{align*}
Given our Lemma~\ref{lem:settomultivariate}, we can express the above as the following (multivariate) combinatorial problem:
\begin{align} \label{eq:combiopt}
\maximize & \fb(\x) \nonumber \\
\subjectto & \x \in \{-1,+1\}^n, %\nonumber \\
% & 
\quad \one_{n+1}^\top \x = n-2r.
\end{align}
Note that since given eq.\eqref{eq:x}, having $|A|=r$ is equivalent to having $r$ minus ones and $n-r$ ones in vector $\x$, and thus the equivalent constraint is $\one_{n+1}^\top \x = n-2r$.

We now define the Fourier matrix $\Fh \in \R^{(n+1)\times(n+1)}$ as follows:
\[
\Fh = \begin{bmatrix}
0 & \fh(\{1,2\}) & \cdots & \fh(\{1,n\}) & \fh(\{1\}) \\
\fh(\{1,2\}) & 0 & \cdots & \fh(\{2,n\}) & \fh(\{2\}) \\
\vdots & \vdots & \ddots & \vdots & \vdots \\
\fh(\{1,n\}) & \fh(\{2,n\}) &\cdots & 0 & \fh(\{n\}) \\
\fh(\{1\}) & \fh(\{2\}) &\cdots & \fh(\{n\}) & \fh(\emptyset)
\end{bmatrix},
\]
The above definition allows to write our result in Theorem~\ref{thm:fourier} as follows:
\begin{equation} \label{eq:objective}
\fb(\x) = \begin{bmatrix}\x \\ 1\end{bmatrix}^\top \Fh \begin{bmatrix}\x \\ 1\end{bmatrix}
\end{equation}

Instead of working directly with the vector $\x$, we consider its lifted form $\X = \begin{bmatrix}\x \\ 1\end{bmatrix} \begin{bmatrix}\x \\ 1\end{bmatrix}^\top$, where $\X \in \R^{(n+1)\times(n+1)}$ is a positive semidefinite matrix of rank 1.
In this representation, the objective becomes linear since
$\begin{bmatrix}\x \\ 1\end{bmatrix}^\top \Fh \begin{bmatrix}\x \\ 1\end{bmatrix} = \tr\left(\begin{bmatrix}\x \\ 1\end{bmatrix}^\top \Fh \begin{bmatrix}\x \\ 1\end{bmatrix}\right) = \tr\left(\Fh \begin{bmatrix}\x \\ 1\end{bmatrix} \begin{bmatrix}\x \\ 1\end{bmatrix}^\top\right) = \innerprod{\Fh}{\X}$.
The combinatorial constraint $\x \in \{-1,+1\}^n$ is equivalent to the constraint $\diag(\X) = \one_{n+1}$ since $x_i^2=1$ for all $i$.
Let $\H = \begin{bmatrix}\zero_n\zero_n^\top & \one_n \\ \one_n^\top & 0\end{bmatrix}$.
The constraint $\one_{n+1}^\top \x = n-2r$ is equivalent to $\innerprod{\H}{\X} = 2(n-2r)$ since $\X$ can also be written as $\X = \begin{bmatrix}\x\x^\top & \x \\ \x^\top & 1 \end{bmatrix}$.

For $\X \in \R^{(n+1)\times(n+1)}$, given our reasoning above and dropping the rank-1 constraint, we can relax the optimization problem eq.\eqref{eq:combiopt} as the following semidefinite program:
\begin{align} \label{eq:sdp}
\maximize & \innerprod{\Fh}{\X} \nonumber \\
\subjectto & \X \succeq \zero_{n+1}\zero_{n+1}^\top, %\nonumber \\
% &
\quad \diag(\X) = \one_{n+1}, %\nonumber \\
% &
\quad \innerprod{\H}{\X} = 2(n-2r),
\end{align}

We now devise a scalable solver for the above semidefinite relaxation.
We follow an interior point method~\cite{Boyd06}, which replaces the inequality constraints (i.e., $\X \succeq \zero_{n+1}\zero_{n+1}^\top$ in our problem) with a logarithmic barrier function (i.e., $\log \det(\X)$ in our case).
That is, for a logarithmic barrier factor $t>0$, we have:
\begin{align} \label{eq:interiorpoint}
\maximize & \innerprod{\Fh}{\X} + \frac{1}{t} \log \det(\X) \nonumber \\
\subjectto & \diag(\X) = \one_{n+1}, %\nonumber \\
% & 
\quad \innerprod{\H}{\X} = 2(n-2r),
\end{align}
The logarithmic barrier $\log \det(\X)$ ensures that $\X$ remains strictly within the interior of the semidefinite cone, i.e., $\X \succ \zero_n \zero_n^\top$~\cite{Boyd06}, thereby aiding convergence and preventing numerical degeneracy.

We then follow a dual gradient ascent approach that updates the dual variables (i.e., $\bnu \in \R^{n+1}$ associated with the constraint $\diag(\X) = \one_{n+1}$, and $\lambda \in \R$ associated with the constraint $\innerprod{\H}{\X} = 2(n-2r)$) and can recover the primal variable $\X$ at any iteration.
Algorithm~\ref{alg:interiorpoint} describes our method.
(Full derivation is included in Appendix~\ref{app:interiorpoint}.)

\begin{minipage}{0.48\textwidth}
\begin{algorithm}[H]%[t]
\caption{Interior Point Method for Solving Eq.\eqref{eq:sdp}}
\label{alg:interiorpoint}
\begin{algorithmic}[1]
  \STATE \textbf{Input:} Fourier matrix $\Fh \in \R^{(n+1) \times (n+1)}$, echo chamber size $r$, logarithmic barrier factor $t>0$, number of iterations $L$, step size $\eta>0$.
  \STATE $\bnu \leftarrow \zero_{n+1}$
  \STATE $\lambda \leftarrow 0$
  \FOR{$l = 1,\dots,L$}
    \STATE $\M \leftarrow -t \, \Fh + \Diag(\bnu) + \lambda \H$
    \STATE $\X \leftarrow \inv{\M}$
    \STATE $\bnu \leftarrow \bnu + \eta \, (\diag(\X) - \one_{n+1})$
    \STATE $\lambda \leftarrow \lambda + \eta \, (\innerprod{\H}{\X} - 2(n-2r))$
  \ENDFOR
  \STATE $\begin{bmatrix}\x \\ 1\end{bmatrix} \leftarrow$ the maximum eigenvector of $\X$
  \STATE \textbf{Output:} $\x \in \R^n$.
\end{algorithmic}
\end{algorithm}
\end{minipage}
\hfill
\begin{minipage}{0.48\textwidth}
\begin{algorithm}[H]%[t]
\caption{Scalable Interior Point Method for Solving Eq.\eqref{eq:sdp}}
\label{alg:interiorpointsparse}
\begin{algorithmic}[1]
  \STATE \textbf{Input:} Sparse Fourier matrix $\Fh \in \R^{(n+1) \times (n+1)}$, echo chamber size $r$, logarithmic barrier factor $t>0$, number of iterations $L$, step size $\eta>0$.
  \STATE $\bnu \leftarrow \zero_{n+1}$
  \STATE $\lambda \leftarrow 0$
  \FOR{$l = 1,\dots,L$}
    \STATE $\M \leftarrow -t \, \Fh + \Diag(\bnu) + \lambda \H$
    \FOR{$i = 1,\dots,n+1$ (easily parallelizable)}
        \STATE $\y = \solve(\M,\e_i)$
        \STATE $d_i = \y^\top \M \y$
    \ENDFOR
    \STATE $\z = \solve\left(\M,\begin{bmatrix}\one_n \\ 0\end{bmatrix}\right)$
    \STATE $\bnu \leftarrow \bnu + \eta \, (\d - \one_{n+1})$
    \STATE $\lambda \leftarrow \lambda + \eta \, (2 \, \y^\top \M \z - 2(n-2r))$
  \ENDFOR
  \STATE $\begin{bmatrix}\x \\ 1\end{bmatrix} \leftarrow$ the minimum eigenvector of $\M$
  \STATE \textbf{Output:} $\x \in \R^n$.
\end{algorithmic}
\end{algorithm}
\end{minipage}

When applying Algorithm~\ref{alg:interiorpoint}, there is the need to: compute an inverse $\X$ of a large matrix $\M$, compute the diagonal of $\X$, compute the inner product $\innerprod{\H}{\X}$, and compute the maximum eigenvector of $\X$.
Matrix $\Fh$ is sparse in practice, while $\H$ is sparse by definition, which makes $\M$ sparse as well.
Note that even though $\M$ is sparse, its inverse $\X$ is usually dense, which we also observe in practice.
This presents an issue for datasets with a large number of nodes $n$.
The goal is then to devise an algorithm that does not store $\X$ at any point.
To do this, we can take advantage of iterative linear equation system solvers, such as the Gauss-Seidel, Jaccobi, Richardson, successive over relaxation, minimal residual, among other methods.

Assume a black-box iterative linear equation system solver.
That is, $\y = \solve(\M,\v)$ returns a solution for $\M \y = \v$ or equivalently, $\y = \inv{\M}\v$.
Algorithm~\ref{alg:interiorpointsparse} describes our method.
(Full derivation is included in Appendix~\ref{app:interiorpointsparse}.)
For a graph with $E$ edges, the sparse matrix $\Fh$ has $O(n+E)$ nonzero entries, while $\H$ has $O(n)$ nonzero entries, thus, $\M$ has $O(n+E)$ nonzero entries.
Since iterative solvers are based on matrix-vector multiplications, their complexity is $O(n+E)$ for a sparse $\M$.
Thus, Algorithm~\ref{alg:interiorpointsparse} has a computational complexity of $O(Ln(n+E))$ and a space complexity of $O(n+E)$.
Note that with parallelization, the computational complexity could be improved.

\section{Experiments} \label{sec:exp}

In this section, we show that our method recovers the ground truth echo chamber better than competing methods on small synthetic experiments.
We also show that our method produces echo chambers with better network properties than competing methods on large real-world datasets.
Finally, as an independent validation, we show that our method finds echo chambers with more agreements with suspended users than competing methods on a small real-world dataset.

For all of our experiments and methods, we set the echo chamber size to be $r=\lceil \sqrt{n} \rceil$, where $n$ is the number of nodes.
We use our method in Algorithm~\ref{alg:interiorpointsparse} with the minimal residual method as the sparse linear equation system solver.

Most of the current research applies existing community detection algorithms to detect echo chambers.
We chose several popular community detection algorithms as comparison methods.
Louvain~\cite{blondel2008fast}, Signed Louvain~\cite{xia2021fast}, Girvan-Newman~\cite{newman2006modularity}, and Leiden~\cite{traag2019louvain} are the algorithms based on modularity and community structure metrics.
Infomap~\cite{rosvall2009map} and WalkTrap~\cite{pons2005computing} are the algorithms based on walk metrics.
They have been extensively used in community detection and echo chamber detection tasks~\cite{cossard2020falling, del2017mapping, alatawi2021survey, cinelli2021echo}.
Signed Louvain is a Louvain algorithm's~\cite{blondel2008fast} adaptation for signed graphs.
For these comparison methods, we first choose the smallest community with at least $r$ nodes, and then retain the $r$ nodes with highest number of neighbors.

\paragraph{Synthetic Data.}

\begin{figure}%[ht]
    \centering
    \includegraphics[width=0.5\columnwidth]{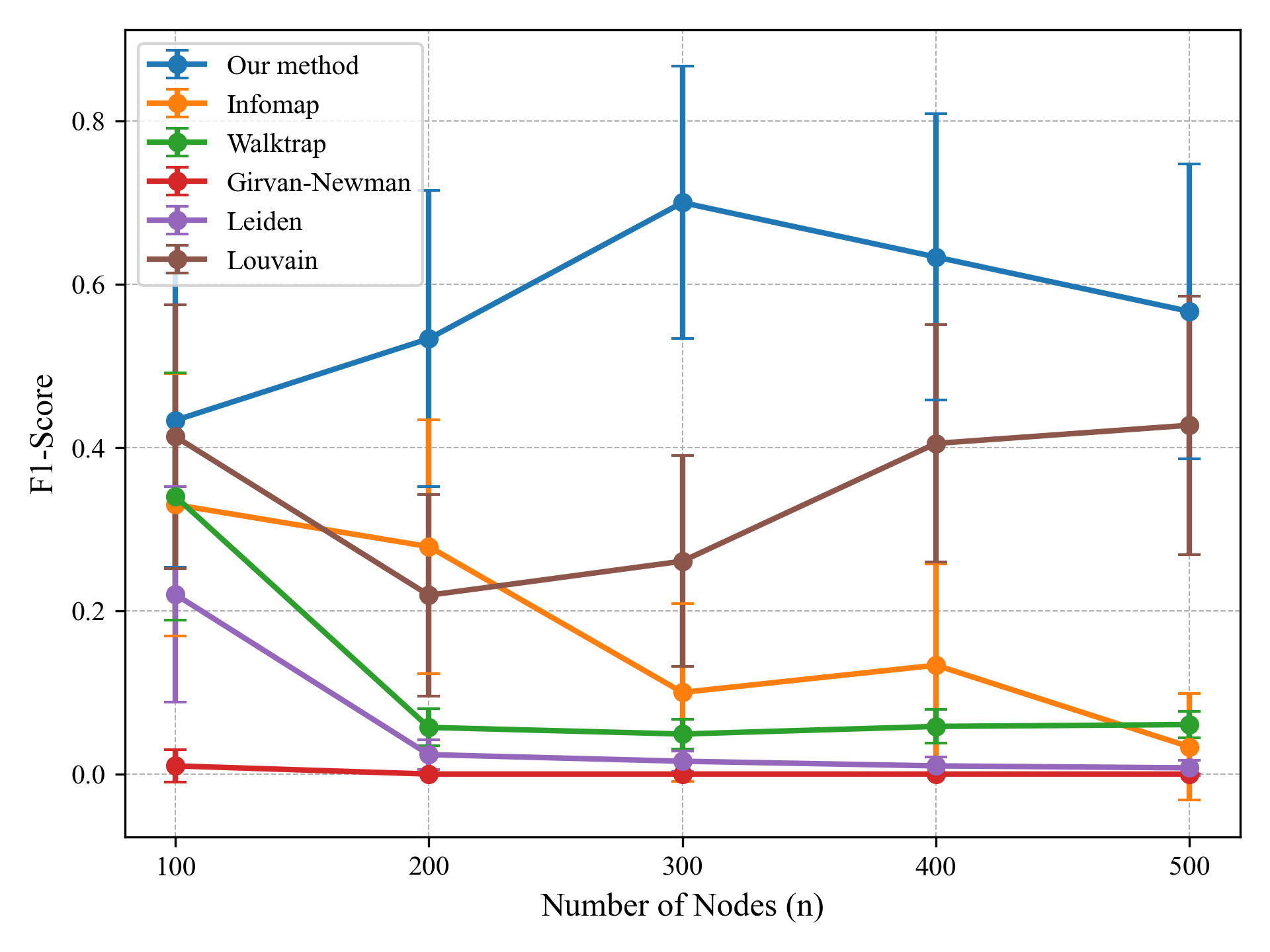}\includegraphics[width=0.5\columnwidth]{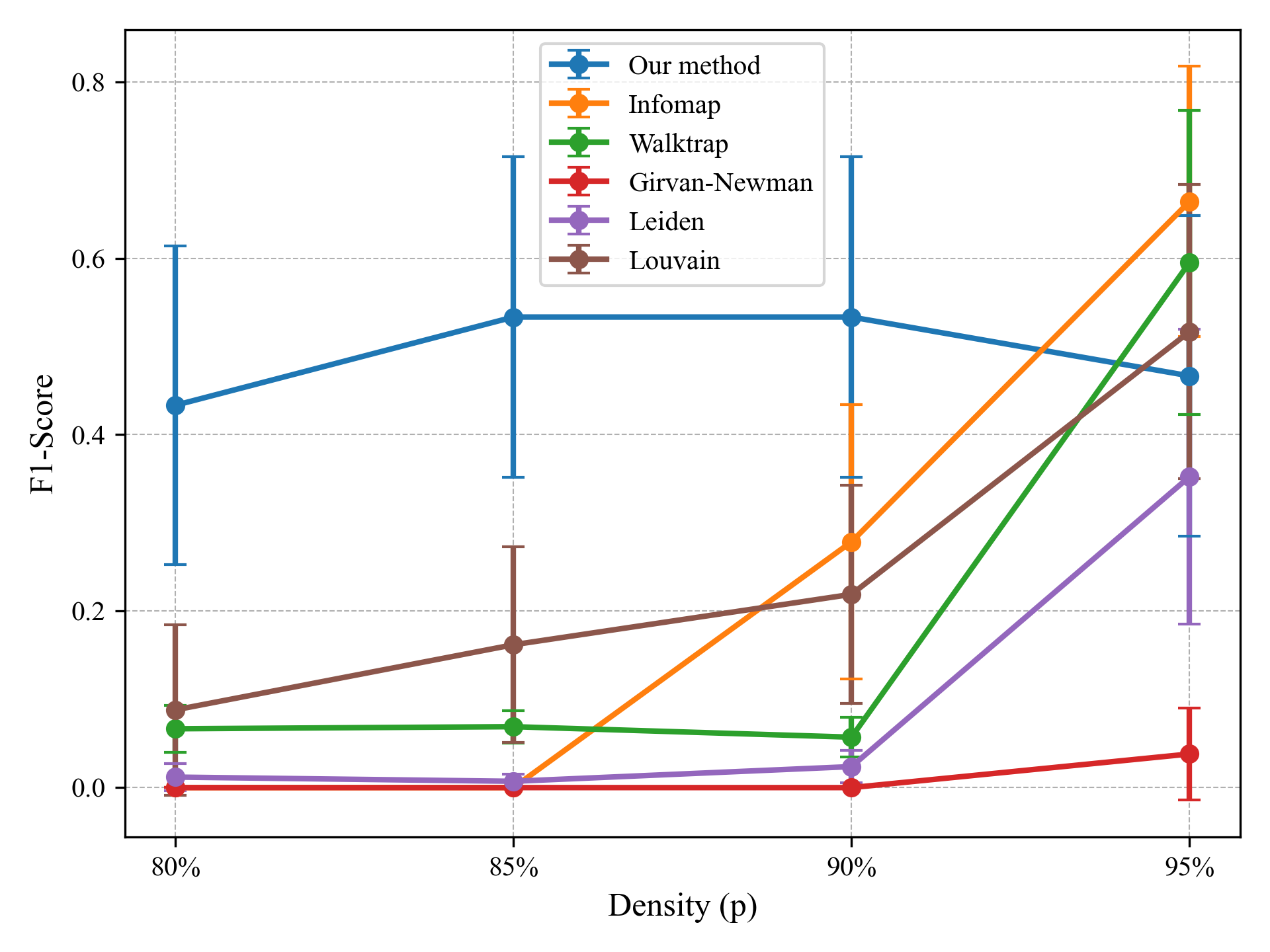}
    \vspace{-0.25in}
    \caption{\textbf{Left:} F1 score between the recovered echo chamber and the ground truth echo chamber, versus various number of nodes ($n$), for density ($p=90\%$).
    \textbf{Right:} F1 score between the recovered echo chamber and the ground truth echo chamber, versus various densities ($p$), for number of nodes ($n=200$).
    Error bars at 95\% confidence level over 30 repetitions.
    Our method outperforms others on recovering the ground truth echo chamber.
    Signed Louvain was not included due to their high computational demand.}
    \label{fig:f1score}
\end{figure}

\begin{table}%[t]
\centering
\caption{Number of nodes, positive edges and negative edges of real-world datasets used in our experiments.}
\label{tab:real_datasets}
\renewcommand{\arraystretch}{1.1}
\begin{tabular}{@{}l@{}ccc@{}}
\hline
\textbf{Dataset} & \textbf{Nodes} & \textbf{Positive} & \textbf{Negative} \\
 & $(n)$ & \textbf{edges} & \textbf{edges} \\
\hline
Russia-Ukraine & 246     & 624        & 142 \\
Facebook       & 4,039   & 176,468    & 0 \\
Abortion       & 7,242   & 3,105,862  & 132,758 \\
Obamacare      & 8,539   & 5,077,974  & 273,478 \\
Twitter        & 81,306  & 2,684,606  & 0 \\
Google         & 107,614 & 24,476,570 & 0 \\
\hline
\end{tabular}
\end{table}

\begin{table*}%[t]
\centering
\caption{Edge statistics (positive/negative edges inside the echo chamber, and positive/negative edges between the echo chamber and outside the echo chamber) and Cheeger constant, which measures the amount of connectivity inside the echo chamber.
For edge statistics, we also include percentages relative to all edges.
Bold values represent the best outcome.
Our method (Ours) consistently produces the most positive edges and best connectivity inside the echo chamber compared to Infomap (IM), Leiden (Le), Louvain (Lo), Walktrap (WT), Girvan-Newman (GN) and signed Louvain (SLo).
SLo was only included for the smallest dataset (Russia-Ukraine), and GN was only included the two smallest datasets (Russia-Ukraine and Facebook), due their high computational demand.
WT was not included for the largest dataset (Google) due to its high memory demand.}
\label{tab:real_results}
\begin{small}
\renewcommand{\arraystretch}{1.2}
\begin{tabular}{@{}l@{}ccccc@{}c@{}}
\hline
\textbf{Dataset} & \textbf{Method} & \textbf{Pos. edges} & \textbf{Neg. edges} & \textbf{Pos. edges} & \textbf{Neg. edges} & \textbf{Cheeger} \\
 &  & \textbf{inside} & \textbf{inside} & \textbf{between} & \textbf{between} & \textbf{constant} \\

\hline
Russia-Ukraine  & Ours & \textbf{38 (5.0\%)} & 0 (0.0\%) & 120 (15.7\%) & \textbf{17 (2.2\%)} & \textbf{1.54} \\
$n=$246  & IM  & 23 (3.0\%) & 0 (0.0\%) & 57 (7.4\%) & 11 (1.4\%) & 0.63 \\
  & Le   & 26 (3.4\%) & 0 (0.0\%) & 70 (9.1\%) & 4 (0.5\%) & 0.53 \\
  & Lo   & 23 (3.0\%) & 0 (0.0\%) & 61 (8.0\%) & 3 (0.4\%) & 0.68 \\
  & WT  & 31 (4.0\%) & 0 (0.0\%) & 134 (17.5\%) & 15 (2.0\%) & 0.79 \\
  & GN & 21 (2.7\%) & 0 (0.0\%) & 197 (25.7\%) & 6 (0.8\%) & 4.6e-16 \\
  & SLo  & 15 (2.0\%) & 0 (0.0\%) & \textbf{8 (1.0\%)}  &  1 (0.1\%) & 0.99 \\

\hline
Facebook  & Ours & \textbf{2008 (1.1\%)} & 0 (0.0\%)  & 9047 (5.1\%) & 0 (0.0\%) & \textbf{61.00} \\
$n=$4,039  & IM  & 1654 (0.9\%) & 0 (0.0\%)  & 9085 (5.1\%) & 0 (0.0\%) & 1.28 \\
  & Le   & 767 (0.4\%) & 0 (0.0\%) & \textbf{2176 (1.2\%)} & 0 (0.0\%) & 1.23 \\
  & Lo   & 742 (0.42\%) & 0 (0.0\%) & 2232 (1.26\%) & 0 (0.0\%) & 0.9 \\
  & WT  & 1241 (0.7\%) & 0 (0.0\%) & 3686 (2.1\%) & 0 (0.0\%) & 5.66 \\
  & GN & 935 (0.5\%) & 0 (0.0\%) & 14004 (7.9\%) & 0 (0.0\%) & 5e-15 \\

\hline
Abortion  & Ours & \textbf{3529 (0.11\%)} & 0 (0.0\%)  & 109329 (3.37\%) & 170 (0.01\%) & \textbf{287.26} \\
$n=$7,242  & IM  & 3422 (0.11\%) & 0 (0.0\%) & 113948 (3.52\%) & 297 (0.01\%) & 169.95 \\
  & Le   & 2557 (0.08\%) & 56 (0.002\%) & 41305 (1.28\%) & \textbf{5483 (0.17\%)} & 4.1e-15 \\
  & Lo   & 2675 (0.08\%) & 0 (0.0\%) & \textbf{40225 (1.24\%)} & 5101 (0.16\%) & 42.4 \\
  & WT  & 3422 (0.11\%) & 0 (0.0\%) & 113948 (3.52\%) & 297 (0.01\%) & 169.95 \\

\hline
Obamacare  & Ours & 4045 (0.08\%) & 0 (0.0\%)  & \textbf{56754 (1.06\%)} & \textbf{10483 (0.20\%)} & 253.27 \\
$n=$8,539  & IM  & 4079 (0.08\%) & 0 (0.0\%) & 177317 (3.31\%) & 416 (0.01\%) & 271.02 \\
  & Le  & 4069 (0.08\%) & 0 (0.0\%) & 176669 (3.30\%) & 406 (0.01\%) & 271.00 \\
  & Lo   & 3412 (0.06\%) & 0 (0.0\%) & 128331 (2.40\%) & 1454 (0.03\%) & 68.4 \\
  & WT  & \textbf{4089 (0.08\%)} & 0 (0.0\%) & 175424 (3.28\%) & 383 (0.01\%) & \textbf{289.40} \\

\hline
Twitter  & Ours & \textbf{17864 (0.67\%)} & 0 (0.0\%) & \textbf{65237 (2.43\%)} & 0 (0.0\%) & \textbf{41.28} \\
$n=$81,306  & IM & 10018 (0.37\%) & 0 (0.0\%) & 121624 (4.53\%) & 0 (0.0\%) & 3.00 \\
  & Le   & 6167 (0.23\%) & 0 (0.0\%) & 77701 (2.89\%) & 0 (0.0\%) & 3.50 \\
  & Lo   & 11025 (0.41\%) & 0 (0.0\%) & 105505 (3.93\%) & 0 (0.0\%) & 10.7 \\
  & WT  & 5739 (0.21\%) & 0 (0.0\%) & 114174 (4.25\%) & 0 (0.0\%) & 3.30 \\

\hline
Google  & Ours & \textbf{47941 (0.20\%)} & 0 (0.0\%) & 1606500 (6.56\%) & 0 (0.0\%) & \textbf{208.70} \\
$n=$107,614  & IM & 28312 (0.12\%) & 0 (0.0\%) & \textbf{1491175 (6.09\%)} & 0 (0.0\%) & 65.85 \\
  & Le   & 29088 (0.12\%) & 0 (0.0\%) & 1510019 (6.17\%) & 0 (0.0\%) & 70.60 \\
  & Lo   & 28888 (0.12\%) & 0 (0.0\%) & 1509087 (6.16\%) & 0 (0.0\%) & 65.8 \\

\hline
\end{tabular}
\end{small}
\end{table*}

Here, we perform experiments on small synthetic datasets.
We create graphs of $n$ nodes, following an approach similar to that for Erd\"{o}s–R\'{e}nyi graphs, that produces graphs with properties resembling those of echo chambers.
We use a global parameter $p \in (0.3, 1)$ that controls the edge density.
(Full details are provided in Appendix~\ref{app:expdetails}.)
By using eq.\eqref{eq:x} we equivalently defined the ground truth vector $\xt = \x(\At) \in \{-1,+1\}^n$.
For all methods, we compute the F1 score between the recovered vector $\x \in \R^n$ and the ground truth vector $\xt$.
That is, we convert $\x$ to a sign vector.
Note that $\xt$ is already a sign vector.
We then compute precision and recall and therefore the F1 score.
For our method in Algorithm~\ref{alg:interiorpointsparse}, we used logarithmic barrier factor $t=0.08$, $L=100$ iterations and step size $\eta=0.1$.

We run 30 repetitions of the above procedure, and report the mean and standard error bars.
As observed in %Figures~\ref{fig:f1score_n} and \ref{fig:f1score_density}
Figures~\ref{fig:f1score}, our method recovers
the ground truth echo chamber better than competing methods on small synthetic experiments.

\paragraph{Real-World Data.}

In what follows, we perform experiments on real-world datasets.
We consider several datasets of various sizes, but favor mostly large datasets.
Table~\ref{tab:real_datasets} shows the statistics regarding number of nodes, positive edges and negative edges.
For all methods, we evaluate the recovered echo chamber with various meaningful metrics.
We compute the number of positive and negative edges between nodes in the echo chamber.
We also compute the number of positive negative edges between nodes in the echo chamber and nodes outside the echo chamber.
To measure the amount of overall/global connectivity inside the echo chamber, we use the Cheeger constant of the subgraph formed by the nodes in the echo chamber.
While computing this quantity is computationally intractable, one can approximate it by computing the second minimum eigenvalue of the Laplacian of the subgraph~\cite{cheeger1969lower}.
For our method in Algorithm~\ref{alg:interiorpointsparse}, we used logarithmic barrier factor $t=0.0001$, $L=10$ iterations and step size $\eta=0.1$.

As observed in Table~\ref{tab:real_results}, our method produces echo chambers with most positive edges and best connectivity than those of competing methods on large real-world datasets.

\paragraph{Independent Validation.}

\begin{table}%[!h]
    \centering
    \caption{Analysis of interactions with suspended users through an unpaired t-test in the Russia-Ukraine dataset.
    $\Delta$SR is the difference between the mean of $SuspendedReplies$ of echo chamber members and the mean of $SuspendedReplies$ of non-echo chamber members.
    $\Delta$SA follows the same procedure for $SuspendedAgrees$.
    We also report their corresponding p-values.
    Our method (Ours) shows the largest and most statistically significant positive difference as compared to Infomap (IM), Leiden (Le), Louvain (Lo), Walktrap (WT), Girvan-Newman (GN) and signed Louvain (SLo).}
    \label{table:SuspendResults}
    \begin{tabular}{@{}ccccc@{}}
        \hline
        \textbf{Method} & \textbf{$\Delta$SR} & \textbf{p-value}   & \textbf{$\Delta$SA} & \textbf{p-value}  \\
    \hline
        Ours & \textbf{2.7054} & \textbf{0.0012} &  \textbf{2.6484}  & \textbf{0.0009}  \\
        IM & 0.7668   & 0.2797 &  0.5092 & 0.4158 \\
        Le & 1.2348  & 0.0591  & 1.3114  & 0.0381 \\
        Lo & 1.6359 & 0.0435 & 1.6457 & 0.0351 \\
        WT & 2.2375  & 0.007  & 2.1804   & 0.0054 \\
        GN & 1.8364  & 0.0323 & 1.8462 & 0.0241 \\
        SLo & -0.7707  & 0.0913  &-0.6272 & 0.1463 \\
        \hline
    \end{tabular}
\end{table}

As discussed in Section~\ref{sec:intro}, echo chambers are extensively associated with misinformation, aggressive content and extremist ideas.
In social networks, the users who perform such activities tend to be suspended by the administrators.
Therefore, it is reasonable to assume that echo chamber members have many interactions and agreements with the suspended users compared to the other users in the network.
Fortunately, the Russia-Ukraine dataset contains ground-truth information of which users were suspended.
We held this information and was not provided to any of the tested algortithms, including ours.
To assess whether an echo chamber detected by any method would likely be so in the real world, we use two measures, the number of replies a given user has with a suspended user ($SuspendedReplies$) and the number of agree replies a given user has with a suspended user ($SuspendedAgrees$).
Since we have two sets of users (echo chamber and non-echo chamber members), we perform an unpaired t-test.
In Table~\ref{table:SuspendResults}, we report the difference between the mean of $SuspendedReplies$ of echo chamber members and the mean of $SuspendedReplies$ of non-echo chamber members.
We follow the same procedure for $SuspendedAgrees$.
We also report their corresponding p-values.

As shown in Table~\ref{table:SuspendResults}, our method shows the largest positive $SuspendReplies$ difference.
The small p-value indicates that this difference is statistically significant.
This means that the echo chamber members detected by our method tend to interact with the suspended users more than the echo chamber members detected by the other algorithms.
Similarly, our method shows the largest positive $SuspendedAgrees$ difference with a small p-value.
This shows that the echo chamber members detected by our method tend to agree with the suspended users' opinions more than the echo chamber members detected by the other algorithms. 
The above results suggests that, compared to the baselines, the echo chambers detected by our method might more likely be actual echo chambers in the real world.

\section{Concluding Remarks}

Our contributions open several questions for future work.
While we could use our method recursively for identifying one echo chamber at a time, it would be interesting to have a direct generalization that identifies several echo chambers.
While we focused on pairwise interactions between two nodes which led to consider edges and graphs, it would be interesting to analyze the case where several nodes interact together which would lead to hyperedges and hypergraphs.

\bibliographystyle{plain}
\bibliography{references}

\clearpage
\appendix

\section{Detailed Proofs for Section \ref{sec:objfun}} \label{app:proofs}

In this section, we provide detailed proofs for the lemmas in the main text.

\subsection{Proof of Lemma~\ref{lem:fourier}} \label{app:fourier}

\begin{proof}
For brevity, we will shorten our notation and write $A$ instead of $A \in 2^V$ in the sums and sets.
Note that in all the proofs, we have $i \neq j$ in general, while in some cases we also have $i<j$.
We analyze four different cases.

\paragraph{Case 1.}

First, we consider the case $B=\emptyset$.
Note that in this case, we have $|A \cap B|=0$ for all $A$.
Therefore, by eq.\eqref{eq:fourier} we have:
\begingroup
\allowdisplaybreaks
\begin{align*}
\fh{(B)} & = 2^{-n} \sum_{A} f(A) (-1)^{|A \cap B|} \\
 & = 2^{-n} \sum_{A} f(A) \\
 & = 2^{-n} \sum_{A} \left( \sum_{i,j \in A, i < j} w_{ij} - \sum_{i \in A, j \notin A} w_{ij} \right) \\
 & = 2^{-n} \left( \sum_{i<j} \sum_{A \mid i,j \in A} w_{ij} - \sum_{i \neq j} \sum_{A \mid i \in A, j \notin A} w_{ij} \right) \\
 & = 2^{-n} \left( \sum_{i<j} w_{ij} \, |\{ A \mid i,j \in A \}| - \sum_{i \neq j} w_{ij} \, |\{ A \mid i \in A, j \notin A \}| \right) \\
 & = 2^{-n} \left( \sum_{i<j} w_{ij} \, |\{ A \mid i,j \in A \}| - 2 \sum_{i<j} w_{ij} \, |\{ A \mid i \in A, j \notin A \}| \right) \\
 & = 2^{-n} \left( \sum_{i<j} w_{ij} \, 2^{n-2} - 2 \sum_{i<j} w_{ij} \, 2^{n-2} \right) \\
 & = -\frac{1}{4} \sum_{i<j} w_{ij}
\end{align*}
\endgroup
where the second to the last line follows since $\left|\{ A \mid i,j \in A \}\right| = 2^{n-2}$ since we fix elements $i$ and $j$ in $A$ and we can choose any subset of $n-2$ other elements.
Similarly, $\left|\{ A \mid i \in A, j \notin A \}\right| = 2^{n-2}$ since we fix element $i$ in $A$ and $j$ not in $A$ and we can choose any subset of $n-2$ other elements.

\paragraph{Case 2.}

Second, we consider the case $B=\{k\}$.
Note that in this case, we have $|A \cap B|=0$ if $k \notin A$, and $|A \cap B|=1$ if $k \in A$.
Therefore, by eq.\eqref{eq:fourier} we have:
\begingroup
\allowdisplaybreaks
\begin{align*}
\fh{(B)} & = 2^{-n} \sum_{A} f(A) (-1)^{|A \cap B|} \\
 & = 2^{-n} \left( \sum_{A \mid k \notin A} f(A) - \sum_{A \mid k \in A} f(A) \right) \\
 & = 2^{-n} \left( \sum_{A \mid k \notin A} \left( \sum_{i,j \in A, i < j} w_{ij} - \sum_{i \in A, j \notin A} w_{ij} \right) - \sum_{A \mid k \in A} \left( \sum_{i,j \in A, i < j} w_{ij} - \sum_{i \in A, j \notin A} w_{ij} \right) \right) \\
 & = 2^{-n} \left( \sum_{i<j} \sum_{A \mid i,j \in A, k \notin A} w_{ij} - \sum_{i \neq j} \sum_{A \mid i \in A, j,k \notin A} w_{ij} - \sum_{i<j} \sum_{A \mid i,j,k \in A} w_{ij} + \sum_{i \neq j} \sum_{A \mid i,k \in A, j \notin A} w_{ij} \right) \\
 & = 2^{-n} \left( \sum_{i<j} \underbrace{\left( \sum_{A \mid i,j \in A, k \notin A} w_{ij} - \sum_{A \mid i,j,k \in A} w_{ij} \right)}_{F_1} + \sum_{i \neq j} \underbrace{\left( \sum_{A \mid i,k \in A, j \notin A} w_{ij} - \sum_{A \mid i \in A, j,k \notin A} w_{ij} \right)}_{F_2} \right)
\end{align*}
\endgroup
Next, we analyze the two inner terms in the above expression.
Regarding the first inner term:
\begin{align*}
F_1 & = \sum_{A \mid i,j \in A, k \notin A} w_{ij} - \sum_{A \mid i,j,k \in A, k \notin \{i,j\}} w_{ij} - \sum_{A \mid i,j,k \in A, k \in \{i,j\}} w_{ij} \\
 & = w_{ij} \, |\{A \mid i,j \in A, k \notin A\}| - w_{ij} \, |\{A \mid i,j,k \in A, k \notin \{i,j\}\}| - \sum_{A \mid i,k \in A} w_{ik} - \sum_{A \mid k,j \in A} w_{kj} \\
 & = - \sum_{A \mid i,k \in A} w_{ik} - \sum_{A \mid k,j \in A} w_{kj} \\
 & = -w_{ik} \, |\{A \mid i,k \in A\}| - w_{kj} \, |\{A \mid k,j \in A\}| \\
 & = -(w_{ik} + w_{kj}) \, 2^{n-2}
\end{align*}
where the third to the last line follows since $|\{A \mid i,j \in A, k \notin A\}| = |\{A \mid i,j,k \in A, k \notin \{i,j\}\}|$.
The last line follows since $|\{A \mid i,k \in A\}| = |\{A \mid k,j \in A\}| = 2^{n-2}$ as argued in Case 1.
Regarding the second inner term:
\begin{align*}
F_2 & = \sum_{A \mid i,k \in A, j \notin A} w_{ij} - \sum_{A \mid i \in A, j,k \notin A} w_{ij} \\
 & = w_{ij} \, |\{A \mid i,k \in A, j \notin A\}| - w_{ij} \, |\{A \mid i \in A, j,k \notin A\}| \\
 & = 0
\end{align*}
where the last line follows since $|\{A \mid i,k \in A, j \notin A\}| = |\{A \mid i \in A, j,k \notin A\}|$.
Thus, we have:
\begin{align*}
\fh(B) & = 2^{-n} (F_1 + F_2) \\
 & = 2^{-n} \sum_{i<j} \left( -(w_{ik} + w_{kj}) \, 2^{n-2} \right) \\
 & = -\frac{1}{4} \sum_{j \neq k} w_{kj}
\end{align*}

\paragraph{Case 3.}

Third, we consider the case $B=\{k,l\}$.
Note that in this case, we have $|A \cap B|=0$ if $k,l \notin A$, $|A \cap B|=1$ if $k \in A, l \notin A$, and $|A \cap B|=2$ if $k,l \in A$.
Therefore, by eq.\eqref{eq:fourier} we have:
\begingroup
\allowdisplaybreaks
\begin{align*}
\fh{(B)} & = 2^{-n} \sum_{A} f(A) (-1)^{|A \cap B|} \\
 & = 2^{-n} \left( \sum_{A \mid k,l \notin A \text{ or } k,l \in A} f(A) - \sum_{A \mid k \in A, l \notin A} f(A) \right) \\
 & = 2^{-n} \left( \sum_{A \mid k,l \notin A \text{ or } k,l \in A} \left( \sum_{i,j \in A, i < j} w_{ij} - \sum_{i \in A, j \notin A} w_{ij} \right) - \sum_{A \mid k \in A, l \notin A} \left( \sum_{i,j \in A, i < j} w_{ij} - \sum_{i \in A, j \notin A} w_{ij} \right) \right) \\
 & = 2^{-n} \left( \sum_{i<j} \sum_{\substack{A \mid i,j \in A, k,l \notin A \\ \text{or } i,j,k,l \in A}} w_{ij} - \sum_{i \neq j} \sum_{\substack{A \mid i \in A, j,k,l \notin A \\ \text{or } i,k,l \in A, j \notin A}} w_{ij} - \sum_{i<j} \sum_{A \mid i,j,k \in A, l \notin A} w_{ij} + \sum_{i \neq j} \sum_{A \mid i,k \in A, j,l \notin A} w_{ij} \right) \\
 & = 2^{-n} \left( \sum_{i<j} \underbrace{\left( \sum_{\substack{A \mid i,j \in A, k,l \notin A \\ \text{or } i,j,k,l \in A}} w_{ij} - \sum_{A \mid i,j,k \in A, l \notin A} w_{ij} \right)}_{F_1} + \sum_{i \neq j} \underbrace{\left( \sum_{A \mid i,k \in A, j,l \notin A} w_{ij} - \sum_{\substack{A \mid i \in A, j,k,l \notin A \\ \text{or } i,k,l \in A, j \notin A}} w_{ij} \right)}_{F_2} \right)
\end{align*}
\endgroup
Next, we analyze the two inner terms in the above expression.
Regarding the first inner term:
\begin{align*}
F_1 & = \sum_{A \mid i,j \in A, k,l \notin A} w_{ij} + \sum_{A \mid i,j,k,l \in A} w_{ij} - \sum_{A \mid i,j,k \in A, l \notin A} w_{ij} \\
 & = \sum_{A \mid i,j \in A, k,l \notin A} w_{ij} + \sum_{A \mid i,j,k,l \in A, (i,j) \neq (k,l)} w_{ij} + \sum_{A \mid i,j,k,l \in A, (i,j) = (k,l)} w_{ij} - \sum_{A \mid i,j,k \in A, l \notin A} w_{ij} \\
 & = \sum_{A \mid i,j \in A, k,l \notin A} w_{ij} + \sum_{A \mid i,j,k,l \in A, (i,j) \neq (k,l)} w_{ij} + \sum_{A \mid k,l \in A} w_{kl} - \sum_{A \mid i,j,k \in A, l \notin A} w_{ij} \\
 & = w_{ij} \, |\{A \mid i,j \in A, k,l \notin A\}| + w_{ij} \, |\{A \mid i,j,k,l \in A, (i,j) \neq (k,l)\}| \\
 & \qquad + \sum_{A \mid k,l \in A} w_{kl} - w_{ij} \, |\{A \mid i,j,k \in A, l \notin A\}| \\
 & = \sum_{A \mid k,l \in A} w_{kl} \\
 & = w_{kl} \, |\{A \mid k,l \in A\}| \\
 & = w_{kl} \, 2^{n-2}
\end{align*}
where the third to the last line follows since $|\{A \mid i,j \in A, k,l \notin A\}| + |\{A \mid i,j,k,l \in A, (i,j) \neq (k,l)\}| =$ $|\{A \mid i,j,k \in A, l \notin A\}|$.
The last line follows since $|\{A \mid k,l \in A\}| = 2^{n-2}$ as argued in Case 1.
Regarding the second inner term:
\begin{align*}
F_2 & = \sum_{A \mid i,k \in A, j,l \notin A} w_{ij} - \sum_{A \mid i \in A, j,k,l \notin A} w_{ij} - \sum_{A \mid i,k,l \in A, j \notin A} w_{ij} \\
 & = \sum_{A \mid i,k \in A, j,l \notin A, (i,j) \notin \{(k,l),(l,k)\}} w_{ij} + \sum_{A \mid i,k \in A, j,l \notin A, (i,j) \in \{(k,l),(l,k)\}} w_{ij} - \sum_{A \mid i \in A, j,k,l \notin A} w_{ij} - \sum_{A \mid i,k,l \in A, j \notin A} w_{ij} \\
 & = \sum_{A \mid i,k \in A, j,l \notin A, (i,j) \notin \{(k,l),(l,k)\}} w_{ij} + \sum_{A \mid k \in A, l \notin A} (w_{kl} + w_{lk}) - \sum_{A \mid i \in A, j,k,l \notin A} w_{ij} - \sum_{A \mid i,k,l \in A, j \notin A} w_{ij} \\
 & = w_{ij} \, |\{A \mid i,k \in A, j,l \notin A, (i,j) \notin \{(k,l),(l,k)\}\}| + \sum_{A \mid k \in A, l \notin A} (w_{kl} + w_{lk}) \\
 & \qquad - w_{ij} \, |\{A \mid i \in A, j,k,l \notin A\}| - w_{ij} \, |\{A \mid i,k,l \in A, j \notin A\}| \\
 & = \sum_{A \mid k \in A, l \notin A} (w_{kl} + w_{lk}) \\
 & = (w_{kl} + w_{lk}) \, |\{A \mid k \in A, l \notin A\}| \\
 & = (w_{kl} + w_{lk}) \, 2^{n-2}
\end{align*}
where the third to the last line follows from the fact that $|\{A \mid i,k \in A, j,l \notin A, (i,j) \notin \{(k,l),(l,k)\}\}| = $ $|\{A \mid i \in A, j,k,l \notin A\}| + |\{A \mid i,k,l \in A, j \notin A\}|$.
The last line follows since $|\{A \mid k \in A, l \notin A\}| = 2^{n-2}$ as argued in Case 1.
Thus, we have:
\begin{align*}
\fh(B) & = 2^{-n} (F_1 + F_2) \\
 & = 2^{-n} \sum_{i<j} \left( w_{kl} \, 2^{n-2} +  (w_{kl} + w_{lk}) \, 2^{n-2} \right) \\
 & = \frac{3}{4} w_{kl}
\end{align*}

\paragraph{Case 4.}

Finally, we consider the case $|B|>2$.
Note that we can write:
\begin{align*}
\fh{(B)} & = 2^{-n} \sum_{A} f(A) (-1)^{|A \cap B|} \\
 & = 2^{-n} \sum_{A} \left( \sum_{i,j \in A, i < j} w_{ij} - \sum_{i \in A, j \notin A} w_{ij} \right) (-1)^{|A \cap B|} \\
 & = 2^{-n} \left( \underbrace{\sum_{A} \sum_{i,j \in A, i < j} w_{ij} (-1)^{|A \cap B|}}_{F_1} - \underbrace{\sum_{A} \sum_{i \in A, j \notin A} w_{ij} (-1)^{|A \cap B|}}_{F_2} \right)
\end{align*}
We now argue that $F_1 = 0$.
Assume we take a set $A$, and elements $i,j \in A$.
Now choose an element $k \in B \setminus \{i,j\}$ and $k \notin A$.
Define $A' = A \cup \{k\}$.
Note that the parity of $|A \cap B|$ is different from the parity of $|A' \cap B|$ by construction, i.e., one quantity is odd and the other quantity is even.
That is, $(-1)^{|A \cap B|} = -(-1)^{|A' \cap B|}$.
Therefore, $F_1$ contains two summands $w_{ij} (-1)^{|A \cap B|} + w_{ij}  (-1)^{|A' \cap B|} = 0$.
By following this argument for all summands of $F_1$, we have that $F_1 = 0$.

We then argue that $F_2 = 0$.
Assume we take a set $A$, and elements $i \in A, j \notin A$.
Now choose an element $k \in B \setminus \{i,j\}$ and $k \in A$.
Define $A' = A \setminus \{k\}$.
Note that the parity of $|A \cap B|$ is different from the parity of $|A' \cap B|$ by construction, i.e., one quantity is odd and the other quantity is even.
That is, $(-1)^{|A \cap B|} = -(-1)^{|A' \cap B|}$.
Therefore, $F_2$ contains two summands $w_{ij} (-1)^{|A \cap B|} + w_{ij}  (-1)^{|A' \cap B|} = 0$.
By following this argument for all summands of $F_2$, we have that $F_2 = 0$.
Therefore $\fh(B) = 2^{-n} (F_1 + F_2) = 0$.
\end{proof}

\section{Additional Details for Section~\ref{sec:sdp}} \label{app:sdp}

In this section, we provide additional details for the algorithm in the main text.

\subsection{Derivation of our Interior Point Method in Algorithm~\ref{alg:interiorpointsparse}} \label{app:interiorpoint}

Since $t>0$, eq.\eqref{eq:interiorpoint} can be equivalently written as:
\begin{align*}
\minimize & -t \, \innerprod{\Fh}{\X} - \log \det(\X) \nonumber \\
\subjectto & \diag(\X) = \one_{n+1}, \nonumber \\
 & \innerprod{\H}{\X} = 2(n-2r),
\end{align*}
Let $\bnu \in \R^{n+1}$ be the dual variable associated with the constraint $\diag(\X) = \one_{n+1}$.
Let $\lambda \in \R$ be the dual variable associated with the constraint $\innerprod{\H}{\X} = 2(n-2r)$.
We now define the Lagrangian associated with this optimization problem:
\begin{align*}
L(\X,\bnu,\lambda) & = -t \, \innerprod{\Fh}{\X} - \log \det(\X) + \innerprod{\bnu}{\diag(\X) - \one_{n+1}} + \lambda (\innerprod{\H}{\X} - 2(n-2r)) \\
 & = -t \, \innerprod{\Fh}{\X} - \log \det(\X) + \innerprod{\Diag(\bnu)}{\X} - \one_{n+1}^\top \bnu + \lambda (\innerprod{\H}{\X} - 2(n-2r)) \\
 & = \innerprod{-t \, \Fh + \Diag(\bnu) + \lambda \H}{\X} - \log \det(\X) - \one_{n+1}^\top \bnu - 2(n-2r) \lambda
\end{align*}
Taking the gradient with respect to $\X$ and equating to zero, leads to:
\begin{align*}
\frac{\partial L}{\partial \X} = -t \, \Fh + \Diag(\bnu) + \lambda \H - \inv{\X} = \zero_{n+1} \zero_{n+1}^\top
\end{align*}
Solving the above for $\X$ leads to its optimal value with respect to the Lagrangian $L$, which is:
\begin{align*}
\Xopt = \inv{\left( -t \, \Fh + \Diag(\bnu) + \lambda \H \right)}
\end{align*}
We can now compute the objective function of the dual problem as follows:
\begin{align*}
 & g(\bnu,\lambda) \\
 & = \min_{\X} L(\X,\bnu,\lambda) \\
 & = L(\Xopt,\bnu,\lambda) \\
 & = \innerprod{-t \, \Fh + \Diag(\bnu) + \lambda \H}{\Xopt} - \log \det(\Xopt) - \one_{n+1}^\top \bnu - 2(n-2r) \lambda \\
 & = \innerprod{-t \, \Fh + \Diag(\bnu) + \lambda \H}{\inv{\left( -t \, \Fh + \Diag(\bnu) + \lambda \H \right)} } - \log \det\left(\inv{\left( -t \, \Fh + \Diag(\bnu) + \lambda \H \right)}\right) \\
 & \qquad - \one_{n+1}^\top \bnu - 2(n-2r) \lambda \\
 & = \tr(\I_{n+1}) + \log \det\left( -t \, \Fh + \Diag(\bnu) + \lambda \H \right) - \one_{n+1}^\top \bnu - 2(n-2r) \lambda \\
 & = (n+1) + \log \det\left( -t \, \Fh + \Diag(\bnu) + \lambda \H \right) - \one_{n+1}^\top \bnu - 2(n-2r) \lambda
\end{align*}

We now proceed with a gradient ascent approach in order to maximize the objective function of the dual problem~\cite{Boyd06}.
For this, we compute the gradient with respect to $\bnu$, which is:
\begin{align*}
\frac{\partial g}{\partial \nu_i} & = \innerprod{\e_i \e_i^\top}{\inv{\left( -t \, \Fh + \Diag(\bnu) + \lambda \H \right)}} - 1 \\
  & = \innerprod{\e_i \e_i^\top}{\Xopt} - 1 \\
  & = \Xopt_{ii} - 1
\end{align*}
Therefore, we have:
\begin{align*}
\frac{\partial g}{\partial \bnu} & = \diag(\Xopt) - \one_{n+1}
\end{align*}
Then, we compute the derivative with respect to $\lambda$, which is:
\begin{align*}
\frac{\partial g}{\partial \lambda} & = \innerprod{\H}{\inv{\left( -t \, \Fh + \Diag(\bnu) + \lambda \H \right)}} - 2(n-2r) \\
 & = \innerprod{\H}{\Xopt} -2(n-2r)
\end{align*}
For a step size $\eta>0$, the final dual gradient ascent approach, is given by the update rules: $\bnu \leftarrow \bnu + \eta \, \frac{\partial g}{\partial \bnu}$ and $\lambda \leftarrow \lambda + \eta \, \frac{\partial g}{\partial \lambda}$.

\subsection{Derivation of our Scalable Interior Point Method in Algorithm~\ref{alg:interiorpointsparse}} \label{app:interiorpointsparse}

Recall that the logarithmic barrier $\log \det(\X)$ ensures that $\X$ remains strictly within the interior of the semidefinite cone, i.e., $\X \succ \zero_n \zero_n^\top$~\cite{Boyd06}.
Since $\X = \inv{\M}$ this also implies that $\M \succ \zero_n \zero_n^\top$.

First, we reason about the diagonal of $\X$ since $\diag(\X)$ is involved in the gradient ascent update of $\bnu$.
Let $\d = \diag(\X)$, i.e., $d_i = x_{ii}$.
Note that $x_{ii}$ can be written as:
\begin{align*}
d_i & = x_{ii} \\
 & = \e_i^\top \X \e_i \\
 & = \e_i^\top \inv{\M} \e_i \\
 & = \e_i^\top \inv{\M} \M \inv{\M} \e_i \\
 & = \y^\top \M \y
\end{align*}
for $\y = \inv{\M} \e_i$.
Second, we reason about the inner product $\innerprod{\H}{\X}$ which is involved in the gradient ascent update of $\lambda$.
Note that $\H$ can be written as:
\begin{align*}
\H = \begin{bmatrix}\one_n \\ 0\end{bmatrix} \e_{n+1}^\top + \e_{n+1} \begin{bmatrix}\one_n \\ 0\end{bmatrix}^\top
\end{align*}
Given the above, we have:
\begin{align*}
\innerprod{\H}{\X} & = \tr(\H \X) \\
 & = \tr\left( \left( \begin{bmatrix}\one_n \\ 0\end{bmatrix} \e_{n+1}^\top + \e_{n+1} \begin{bmatrix}\one_n \\ 0\end{bmatrix}^\top \right) \inv{\M} \right) \\
 & = \tr\left( \e_{n+1}^\top \inv{\M} \begin{bmatrix}\one_n \\ 0\end{bmatrix} + \begin{bmatrix}\one_n \\ 0\end{bmatrix}^\top \inv{\M} \e_{n+1} \right) \\
 & = 2 \, \e_{n+1}^\top \inv{\M} \begin{bmatrix}\one_n \\ 0\end{bmatrix} \\
 & = 2 \, \e_{n+1}^\top \inv{\M} \M \inv{\M} \begin{bmatrix}\one_n \\ 0\end{bmatrix} \\
 & = 2 \, \y \M \z
\end{align*}
where $\y = \inv{\M} \e_{n+1}$ and $\z = \inv{\M} \begin{bmatrix}\one_n \\ 0\end{bmatrix}$.
Finally, since $\X = \inv{\M}$ and since $\X \succ \zero_n \zero_n^\top$, the maximum eigenvector of $\X$ is the minimum eigenvector of $\M$.

\iffalse
trace(X H) = trace( X ( v w^T + w v^T ) )
= trace( X v w^T + X w v^T )
= trace( w^T X v + v^T  X w )
= 2 w^T X v
= 2 w^T M^{ -1} v
= 2 w^T M^{ -1} M M^{ -1} v
\fi

\section{Additional Experimental Details} \label{app:expdetails}

\subsection{Synthetic Data: Graph Creation}

We create graphs of $n$ nodes, following an approach similar to that for Erd\"{o}s–R\'{e}nyi graphs, that produces graphs with properties resembling those of echo chambers.
We use a global parameter $p \in (0.3, 1)$ that controls the edge density.
First, we choose $r$ nodes uniformly at random from the $n$ nodes, to be the ground truth echo chamber.
Then, we create edges as follows.
Let $\At$ be the ground truth echo chamber with $|\At|=r$ nodes.
For nodes $i,j \in \At$, we create an edge (i.e., $w_{ij} \neq 0$) with probability $p$.
Further, if $w_{ij} \neq 0$, then $w_{ij}=1$ with probability $p$ and $w_{ij}=-1$ with probability $1-p$.
For nodes $i,j \notin \in \At$, we create an edge (i.e., $w_{ij} \neq 0$) with probability $1-p$.
Further, if $w_{ij} \neq 0$, then $w_{ij}=1$ or $w_{ij}=-1$ with the same probability (50\%).
For node $i \in \At$ and node $j \notin \At$, we create an edge (i.e., $w_{ij} \neq 0$) with probability $1.2-p$.
Further, if $w_{ij} \neq 0$, then $w_{ij}=1$ with probability $1-p$ and $w_{ij}=-1$ with probability $p$.

For instance, for $p=90\%$, the edge density inside the echo chamber is approximately $p=0.9$, and among those edges $p=90\%$ are positive, resembling the expected interactions inside an echo chamber.
The edge density outside the echo chamber is approximately $1-p=10\%$, and among those edges $50\%$ are positive, resembling what we expect on a regular social network.
The edge density between the echo chamber and nodes outside, is approximately $1.2-p=30\%$, and among those edges $1-p=10\%$ are positive, resembling the expected interactions between the echo chamber and nodes outside the echo chamber.

\end{document}